\newcommand{\KL}[2]{\textit{KL}(#1\mid\mid#2)}
\newcommand{\PM}[1]{\mathbb{P}_{#1}} 
\newcommand{\PJ}[2]{\mathbb{P}_{#1#2}}  
\newcommand{\PI}[2]{\mathbb{P}_{#1}\otimes\mathbb{P}_{#2}}  
\newcommand\PP{\mathbb{P}}
\newcommand{\Tau}{\mathcal{T}}
\newcommand{\FGen}{T}
\newcommand\RR{\mathbb{R}}
\newcommand\EE{\mathbb{E}}
\newcommand{\ent}{\mathcal{H}}
\newcommand{\SN}{\FGen_{\phi}}
\newtheoremstyle{questionstyle}
  {\topsep}   
  {0}         
  {\itshape}  
  {0pt}       
  {\bfseries} 
  {.}         
  {5pt plus 1pt minus 1pt} 
  {}          
\theoremstyle{questionstyle}\newtheorem{question}{Question}
\newtheorem{theorem}{Theorem}
\newtheorem{lemma}[theorem]{Lemma}
\title{Mutual Information State Intrinsic Control}
\author{Rui Zhao$^{1,2}$\thanks{Correspondence to: Rui Zhao {\tt\small $\lbrace$zhaorui.in.germany@gmail.com$\rbrace$}.},\ Yang Gao$^{3}$,\ Pieter Abbeel$^{4}$,\ Volker Tresp$^{1,2}$,\ Wei Xu$^{5}$\\
$^{1}$Ludwig Maximilian University of Munich
$^{2}$Siemens AG \\
$^{3}$Tsinghua University
$^{4}$University of California, Berkeley
$^{5}$Horizon Robotics
}
\begin{document}

\maketitle

\begin{abstract}
Reinforcement learning has been shown to be highly successful at many challenging tasks. However, success heavily relies on well-shaped rewards. Intrinsically motivated RL attempts to remove this constraint by defining an intrinsic reward function. Motivated by the self-consciousness concept in psychology, we make a natural assumption that the agent knows what constitutes itself, and propose a new intrinsic objective that encourages the agent to have maximum control on the environment. We mathematically formalize this reward as the mutual information between the agent state and the surrounding state under the current agent policy. With this new intrinsic motivation, we are able to outperform previous methods, including being able to complete the pick-and-place task for the first time without using any task reward. A video showing experimental results is available at \url{https://youtu.be/AUCwc9RThpk}.
\end{abstract}

\section{Introduction}
\label{sec:introduction} 
Reinforcement learning (RL) allows an agent to learn meaningful skills by interacting with an environment and optimizing some reward function, provided by the environment. Although RL has achieved impressive achievements on various tasks~\citep{silver2017mastering, mnih2015human, berner2019dota}, it is very expensive to provide dense rewards for every task we want the robot to learn. Intrinsically motivated reinforcement learning encourages the agent to explore by providing an ``internal motivation'' instead, such as curiosity~\citep{schmidhuber1991possibility,pathak2017curiosity,burda2018large}, diversity~\citep{gregor2016variational,haarnoja2018soft,eysenbach2018diversity} and empowerment~\citep{klyubin2005empowerment,salge2014empowerment,mohamed2015variational}. Those internal motivations can be computed on the fly when the agent is interacting with the environment, without any human engineered reward. We hope to extract useful ``skills'' from those internally motivated agents, which could later be used to solve downstream tasks, or simply augment the sparse reward with those intrinsic rewards to solve a given task faster.

Most of the previous works in RL model the environment as a Markov Decision Process (MDP). In an MDP, we use a single state vector to describe the current state of the whole environment, without explicitly distinguishing the agent itself from its surrounding. However, in the physical world, there is a clear boundary between an intelligent agent and its surrounding. The skin of any mammal is an example of such boundary. The separation of the agent and its surrounding also holds true for most of the man-made agents, such as any mechanical robot. This agent-surrounding separation has been studied for a long time in psychology under the concept of self-consciousness. Self-consciousness refers that a subject knows itself is the object of awareness~\citep{sep-self-consciousness}, effectively treating the agent itself differently from everything else. ~\citet{gallup1970chimpanzees} has shown that self-consciousness widely exists in chimpanzees, dolphins, some elephants and human infants. To equally emphasize the agent and its surrounding, we name this separation as agent-surrounding separation in this paper. The widely adopted MDP formulation ignores the natural agent-surrounding separation, but simply stacks the agent state and its surrounding state together as a single state vector. Although this formulation is mathematically concise, we argue that it is over-simplistic, and as a result, it makes the learning harder. 

With this agent-surrounding separation in mind, we are able to design a much more efficient intrinsically motivated RL algorithm. We propose a new intrinsic motivation by encouraging the agent to perform actions such that the resulting agent state should have high Mutual Information (MI) with the surrounding state. Intuitively, the higher the MI, the more control the agent could have on its surrounding. We name the proposed method ``\underline{MU}tual information-based \underline{S}tate \underline{I}ntrinsic \underline{C}ontrol'', or MUSIC in short. With the proposed MUSIC method, we are able to learn many complex skills in an unsupervised manner, such as learning to pick up an object without any task reward. We can also augment a sparse reward with the dense MUSIC intrinsic reward, to accelerate the learning process. 

Our contributions are three-fold. First, we propose a novel intrinsic motivation (MUSIC) that encourages the agent to have maximum control on its surrounding, based on the natural agent-surrounding separation assumption. Secondly, we propose scalable objectives that make the MUSIC intrinsic reward easy to optimize. Last but not least, we show MUSIC's superior performance, by comparing it with other competitive intrinsic rewards on multiple environments. Noticeably, with our method, for the first time the pick-and-place task can be solved without any task reward.


\section{Preliminaries}

For environments, we consider four robotic tasks, including push, slide, pick-and-place, and navigation, as shown in Figure~\ref{fig:fetch3env1nav}.
The goal in the manipulation task is to move the target object to a desired position.
For the navigation task, the goal is to navigate to a target ball.
In the following, we define some terminologies. 

\subsection{Agent State, Surrounding State, and Reinforcement Learning Settings} 
In this paper, the \textbf{agent state} $s^a$ means literally the state variable of the agent.
The \textbf{surrounding state} $s^s$ refers to the state variable that describes the surrounding of the agent, for example, the state variable of an object.
For multi-goal environments, we use the same assumption as previous works~\citep{andrychowicz2017hindsight,plappert2018multi}, which consider that the goals can be represented as states and we denote the goal variable as $g$. 
For example, in the manipulation task, a goal is a particular desired position of the object in the episode. These desired positions, i.e., goals, are sampled from the environment.

The \textbf{division} between the agent state and the surrounding state is naturally defined by the agent-surrounding separation concept introduced in Section~\ref{sec:introduction}.
From a biology point of view, a human can naturally distinguish its own parts, like hands or legs from the environments. Analog to this, when we design a robotic system, we can easily know what is the agent state and what is its surrounding state. 
In this paper, we use upper letters, such as $S$, to denote random variables and the corresponding lower case letter, such as $s$, to represent the values of random variables.

We assume the world is fully observable, including a set of states $\mathcal{S}$, a set of actions $\mathcal{A}$, a distribution of initial states $p(s_0)$, transition probabilities $p(s_{t+1} \mid s_t, a_t)$, a reward function $r$: $\mathcal{S} \times \mathcal{A} \rightarrow \mathbb{R}$, and a discount factor $\gamma \in [0,1]$.
These components formulate a Markov Decision Process represented as a tuple, $(\mathcal{S}, \mathcal{A}, p, r, \gamma)$.  
We use $\tau$ to denote a trajectory, which contains a series of agent states and surrounding states. 
Its random variable is denoted as $\Tau$.


\section{Method}
\label{sec:music}
We focus on agent learning to control its surrounding purely by using its observations and actions without supervision.
Motivated by the idea that when an agent takes control of its surrounding, then there is a high MI between the agent state and the surrounding state, 
we formulate the problem of learning without external supervision as one of learning a policy $\pi_{\theta}(a_t \mid s_t)$ with parameters $\theta$ to maximize intrinsic MI rewards, $r = I(S^a; S^s)$. 
In this section, we formally describe our method, mutual information-based state intrinsic control (MUSIC).

\subsection{Mutual Information Reward Function}

\begin{figure*}
    \centering
    \begin{minipage}{0.45\linewidth}
        \includegraphics[width=\linewidth]{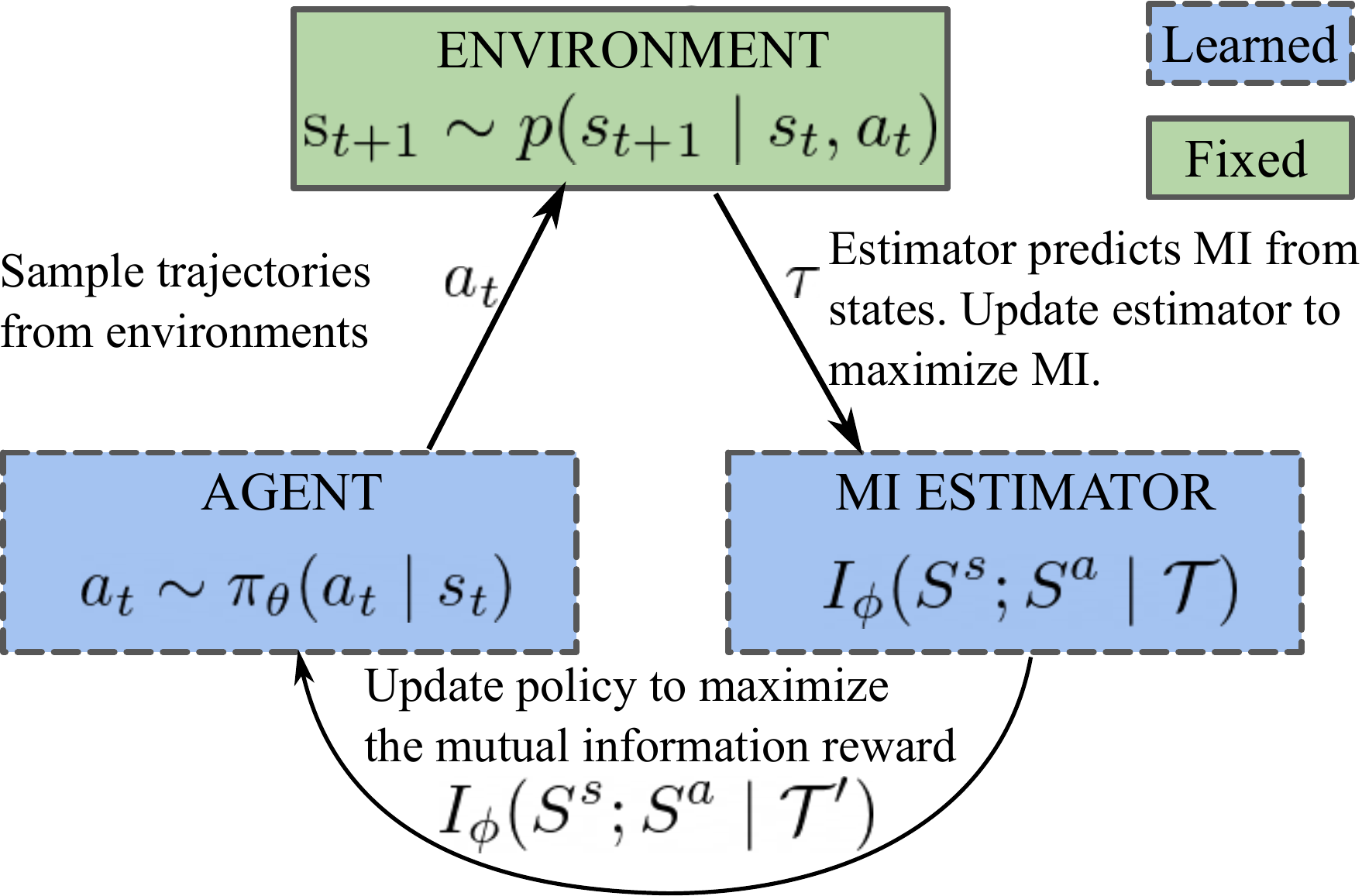}
    \end{minipage} \hfill
    \begin{minipage}{0.54\linewidth}
    
    \begin{algorithm}[H]     
    \DontPrintSemicolon
    \SetAlgoLined
    \While{not converged}{
        Sample an initial state $s_0 \sim p(s_0)$.\\
        \For{$t \leftarrow 1$ \KwTo $steps\_per\_episode$}{
                Sample action $a_t \sim \pi_{\theta}(a_t \mid s_t)$.\\
                Step environment $s_{t+1} \sim p(s_{t+1} \mid s_t, a_t)$.\\
                Sample transitions $\Tau'$ from the buffer.\\
                Set intrinsic reward $r=I_{\phi}(S^s; S^a \mid \Tau')$.\\
                Update policy ($\theta$) via DDPG or SAC.\\
                Update the MI estimator ($\phi$) with SGD.
              }
    } 
    \caption{MUSIC}\label{algo:music}
    \end{algorithm}
    \end{minipage}
    \caption{\textbf{MUSIC Algorithm}:
    We update the estimator to better predict the MI, and update the agent to control the surrounding state to have higher MI with the agent state. \label{fig:music}}
\end{figure*}

Our framework simultaneously learns a policy and an intrinsic reward function by maximizing the MI between the surrounding state and the agent state.
Mathematically, the MI between the surrounding state random variable $S^s$ and the agent state random variable $S^a$ is represented as follows:
\begin{align}
I(S^s; S^a) 
            \label{eq:mi-2}
            &= \KL{\PJ{S^s}{S^a}}{\PI{S^s}{S^a}} \\
            \label{eq:mi-donsker}
            &= \sup_{\FGen : \Omega \to \RR} \EE_{\PJ{S^s}{S^a}}[\FGen] - \log(\EE_{\PI{S^s}{S^a}}[e^{\FGen}]) \\
            \label{eq:mi-lb}
            &\geq\sup_{\phi\in\Phi} \EE_{\PJ{S^s}{S^a}}[\FGen_\phi] - \log(\EE_{\PI{S^s}{S^a}}[e^{\FGen_\phi}])
            \coloneqq I_{\Phi}(S^s; S^a), 
\end{align}
where $\PJ{S^s}{S^a}$ is the joint probability distribution; $\PI{S^s}{S^a}$ is the product of the marginal distributions $\PM{S^s}$ and $\PM{S^a}$; $\textit{KL}$ denotes the Kullback-Leibler (KL) divergence.
MI is notoriously difficult to compute in real-world settings~\citep{hjelm2018learning}.
Compared to the variational information maximizing-based approaches~\citep{barber2003algorithm,alemi2016deep,chalk2016relevant,kolchinsky2017nonlinear}, the recent MINE-based approaches have shown superior performance~\citep{belghazi2018mine,hjelm2018learning,velickovic2019deep}. 
Motivated by MINE~\citep{belghazi2018mine}, we use a lower bound to approximate the MI quantity $I(S^s; S^a)$. 
First, we rewrite Equation~(\ref{eq:mi-2}), the KL formulation of the MI objective, using the Donsker-Varadhan representation, to Equation~(\ref{eq:mi-donsker})~\citep{donsker1975asymptotic}.
The input space $\Omega$ is a compact domain of $\RR^d$, i.e., $\Omega \subset \RR^d$, and the supremum is taken over all functions $\FGen$ such that the two expectations are finite.
Secondly, we lower bound the MI in the Donsker-Varadhan representation with the compression lemma in the PAC-Bayes literature and derive Equation~(\ref{eq:mi-lb})~\citep{banerjee2006bayesian,belghazi2018mine}.
The expectations in Equation~(\ref{eq:mi-lb}) are estimated by using empirical samples from $\PJ{S^s}{S^a}$ and $\PI{S^s}{S^a}$.
The statistics model $\FGen_{\phi}$ is parameterized by a deep neural network with parameters $\phi \in \Phi$, whose inputs are the empirical samples. 

\subsection{Effectively Computing the Mutual Information Reward in Practice} 

\begin{lemma}
\label{le:relation}
There is a monotonically increasing relationship between $I_{\phi}(S^s; S^a \mid \Tau)$ and $\EE_{\PM{\Tau'}} [ I_{\phi}(S^s; S^a \mid \Tau')]$, mathematically,  
\begin{align}
\label{eq:mi-eq-surrogate}
I_{\phi}(S^s; S^a \mid \Tau) \ltimes \EE_{\PM{\Tau'}} [ I_{\phi}(S^s; S^a \mid \Tau')],
\end{align}
where $S^s$, $S^a$, and $\Tau$ denote the surrounding state, the agent state, and the trajectory, respectively. The trajectory fractions are defined as the adjacent state pairs, namely $\Tau'=\{S_{t}, S_{t+1}\}$.
The symbol $\ltimes$ denotes a monotonically increasing relationship between two variables and $\phi$ represents the parameter of the statistics model in MINE. $\textit{Proof.}$ See Appendix~\ref{app:surrogate}. {\hfill $\square$}
\end{lemma}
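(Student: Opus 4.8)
The plan is to expand the MINE lower bound $I_\phi$ into its two constituent pieces and track how each behaves when the empirical samples come from the whole trajectory $\Tau$ versus from the adjacent-pair fractions $\Tau'=\{S_t,S_{t+1}\}$. Writing $f_t \coloneqq \FGen_\phi(s^s_t, s^a_t)$ for the matched (joint) evaluation, the trajectory-level estimator takes the form of an empirical expectation under the joint, $\frac{1}{|\Tau|}\sum_t f_t$, minus a log-partition term $\log\big(\frac{1}{|\Tau|^2}\sum_{t,t'} e^{\FGen_\phi(s^s_t, s^a_{t'})}\big)$ built from all cross pairs. The first step is to show that the linear (joint) term is preserved under the fraction decomposition: averaging the fraction-level joint terms $\frac{1}{2}(f_t+f_{t+1})$ over all adjacent pairs reconstructs $\frac{1}{|\Tau|}\sum_t f_t$ up to the $O(1/|\Tau|)$ boundary contribution of the two endpoint states, so this piece is, up to a constant, identical on both sides.

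The second, and harder, step concerns the log-partition term. On the right-hand side each fraction contributes $\log\big(\frac{1}{4}\sum e^{\FGen_\phi}\big)$ over only its four within-pair evaluations (the two matched and the two adjacent cross terms), and we then take $\EE_{\PM{\Tau'}}$ of this quantity. Because $\log$ is concave, Jensen's inequality gives $\EE_{\PM{\Tau'}}\!\big[\log(\cdots)\big]\le \log\big(\EE_{\PM{\Tau'}}[\cdots]\big)$, which bounds the averaged fraction partition by the log of the averaged partition. I would then argue that the averaged fraction partition $\EE_{\PM{\Tau'}}[\frac14\sum e^{\FGen_\phi}]$ is a monotone proxy for the full partition $\frac{1}{|\Tau|^2}\sum_{t,t'}e^{\FGen_\phi}$: both are averages of the same exponentiated statistics, the former over matched-plus-adjacent pairs and the latter over all pairs, so they move in the same direction as $\FGen_\phi$ sharpens the contrast between matched and mismatched evaluations.

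Combining the two steps, the two estimators differ only through the log-partition term, and that term is a strictly increasing function of a common underlying quantity, namely the degree to which $\FGen_\phi$ separates joint samples from product samples, i.e. the sharpness of the learned critic. Hence as the policy increases the agent's control over its surrounding, both $I_{\phi}(S^s; S^a \mid \Tau)$ and $\EE_{\PM{\Tau'}} [ I_{\phi}(S^s; S^a \mid \Tau')]$ increase together, establishing the monotone relationship $\ltimes$ claimed in Equation~(\ref{eq:mi-eq-surrogate}).

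I expect the main obstacle to be the log-partition comparison in the second step: the fraction-based estimator draws its negative (product-of-marginals) samples only from temporally adjacent states, a strict subset of the $|\Tau|^2$ cross pairs used by the trajectory-level estimator. Making the monotone-proxy claim rigorous requires either a stationarity or mixing assumption so that adjacent cross pairs are representative of arbitrary cross pairs, or a convexity argument controlling the gap introduced by Jensen's inequality; pinning down the exact conditions under which the relationship is strictly rather than merely weakly monotone is where the care is needed.
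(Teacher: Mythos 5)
Your proposal follows essentially the same route as the paper's own proof in Appendix~\ref{app:surrogate}: both split the MINE bound into the linear joint term, which decomposes exactly over the adjacent-pair fractions (your first step), and the log-partition term, which does not, and both then try to bridge the log nonlinearity with a monotonicity argument. The paper does this by invoking $\log(x) \ltimes x$ to swap $A - \log(B)$ for $A - B$ in both directions (Equations~(\ref{eq:mi-eq-donsker})--(\ref{eq:mi-eq-fraction-log})), where you instead invoke Jensen's inequality; these are cosmetically different but play the same role.

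The substantive point is the obstacle you flag at the end, and you are right to flag it. The trajectory-level negative expectation $\EE_{\PI{S^s \mid \Tau}{S^a \mid \Tau}}[e^{\FGen_\phi}]$ averages the critic over all cross pairs $(s^s_t, s^a_{t'})$ in the trajectory, whereas the fraction-level negatives use only the temporally adjacent cross terms. The paper's step from Equation~(\ref{eq:mi-eq-x}) to Equation~(\ref{eq:mi-eq-fraction-x}) asserts that these coincide (``move the implicit expectation over trajectory fractions to the front'') with no justification; that is precisely the stationarity or representativeness assumption you say is needed, so your unproven ``monotone proxy'' claim is not a defect of your write-up relative to the paper --- it is a gap shared by both arguments, which the paper does not acknowledge. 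Note also that Jensen gives only a one-sided inequality, $\EE_{\PM{\Tau'}}[\log(\cdot)] \le \log(\EE_{\PM{\Tau'}}[\cdot])$, and a one-sided bound does not by itself establish that two quantities increase together; the analogous criticism applies to the paper's use of $\log(x) \ltimes x$, since $A - \log(B)$ and $A - B$ are not monotone functions of one another when $A$ and $B$ vary jointly. In short, your proposal reproduces the paper's argument, correctly isolates its weakest link, and is more candid than the published proof about what would be required to close it.
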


We define the reward for each transition at a given time-step as the mutual information of the pair of adjacent states at that time-step, see Equation~(\ref{eq:mi-eq-surrogate}) Right-Hand Side (RHS).
However, in practice, we find that it is not very efficient to train the MI estimator using state pairs.
To counter this issue, we use all the states in the same trajectory in a batch to train the MI estimator, see Equation~(\ref{eq:mi-eq-surrogate}) Left-Hand Side (LHS), since more empirical samples help to reduce variance and therefore accelerate learning.
In Lemma~\ref{le:relation}, we prove the monotonically increasing relationship between Equation~(\ref{eq:mi-eq-surrogate}) RHS and Equation~(\ref{eq:mi-eq-surrogate}) LHS.

In more detail, we divide the process of computing rewards into two phases, i.e., the training phase and the evaluation phase.
In the training phase, we efficiently train the MI estimator with a large batch of samples from the whole trajectory.
For training the MI estimator network, we first randomly sample the trajectory $\tau$ from the replay buffer.
Then, the states $s_{t}^{a}$ used for calculating the product of marginal distributions are sampled by shuffling the states $s_{t}^{a}$ from the joint distribution along the temporal axis $t$ within the trajectory.
We use back-propagation to optimize the parameter $(\phi)$ to maximize the MI lower bound, see Equation~(\ref{eq:mi-eq-surrogate}) LHS. 

For evaluating the MI reward, we use a pair of transitions to calculate the transition reward, see Equation~(\ref{eq:mi-eq-surrogate}) RHS and Equation~(\ref{eq:mi-transition}), instead of using the complete trajectory.
Each time, to calculate the MI reward for the transition, the reward is calculated over a small fraction of the complete trajectory $\tau'$, namely $r=I_{\phi}(S^s; S^a \mid \Tau')$. 
The trajectory fraction, $\tau'$, is defined as adjacent state pairs, $\tau'=\{s_{t}, s_{t+1}\}$, and $\Tau'$ represents its corresponding random variable.

The derived Lemma~\ref{le:relation} brings us two important benefits. 
First, it enables us to efficiently train the MI estimator using all the states in the same trajectory.
And a large batch of empirical samples reduce the variance of the gradients.
Secondly, it allows us to estimate the MI reward for each transition with only the relevant state pair. This way of estimating MI enables us to assign rewards more accurately at the transition level.

Based on Lemma~\ref{le:relation}, we calculate the transition reward as the MI of each trajectory fraction, namely
\begin{align}
\begin{split}
\label{eq:mi-transition}
r_{\phi}(a_t,s_t)
:= I_{\phi}(S^s; S^a | \Tau') 
= 0.5 {\textstyle \sum}_{i=t}^{t+1} \SN(s^s_i, s^a_i) - \log(0.5 {\textstyle \sum}_{i=t}^{t+1} e^{\SN(s^s_i,
      \bar{{s}}^{a}_i)}),
\end{split}  
\end{align}
where $(s^s_i, s^a_i) \sim \PJ{S^s}{S^a\mid \Tau'} $,  $\bar{{s}}^{a}_i \sim \PP_{S^a\mid \Tau'} $, and $\tau'=\{s_{t}, s_{t+1}\}$. 
In case that the estimated MI value is particularly small, we scale the reward with a hyper-parameter $\alpha$ and clip the reward between 0 and 1.
MUSIC can be combined with any off-the-shelf reinforcement learning methods, such as deep deterministic policy gradient (DDPG)~\citep{lillicrap2015continuous} and soft actor-critic (SAC)~\citep{haarnoja2018soft}. 
We summarize the complete training algorithm in Algorithm~\ref{algo:music} and in Figure~\ref{fig:music}.

\textbf{MUSIC Variants with Task Rewards:}
The introduced MUSIC method is an unsupervised reinforcement learning approach, which is denoted as ``MUSIC-u'', where ``-u'' stands for unsupervised learning.
We propose three ways of using MUSIC to accelerate learning.
The first method is using the MUSIC-u pretrained policy as the parameter initialization and then fine-tuning the agent with the task rewards. 
We denote this variant as ``MUSIC-f'', where ``-f'' stands for fine-tuning. 
The second variant is to use the MI intrinsic reward to help the agent to explore more efficiently. 
Here, the MI reward and the task reward are added together.
We name this method as ``MUSIC-r'', where ``-r'' stands for reward. 
The third approach is to use the MI quantity from MUSIC to prioritize trajectories for replay. 
The approach is similar to the TD-error-based prioritized experience replay (PER)~\citep{schaul2015prioritized}.
The only difference is that we use the estimated MI instead of the TD-error as the priority for sampling.
We name this method as ``MUSIC-p'', where ``-p'' stands for prioritization.

\textbf{Skill Discovery with MUSIC and DIAYN:}
One of the relevant works on unsupervised RL, DIAYN~\citep{eysenbach2018diversity}, introduces an information-theoretical objective $\mathcal{F}_{\text{DIAYN}}$, which learns diverse discriminable skills indexed by the latent variable $Z$, mathematically,
\begin{math}
\mathcal{F}_{\text{DIAYN}} \nonumber
          = I(S; Z) + \ent(A \mid S, Z).  \nonumber
\end{math}
The first term, $I(S; Z)$, in the objective, $\mathcal{F}_{\text{DIAYN}}$, is implemented via a skill discriminator, which serves as a variational lower bound of the original objective~\citep{barber2003algorithm,eysenbach2018diversity}. 
The skill discriminator assigns high rewards to the agent, if it can predict the skill-options, $Z$, given the states, $S$. 
Here, we substitute the full state $S$ with the surrounding state $S^s$ to encourage the agent to learn control skills.
DIAYN and MUSIC can be combined as follows:
\begin{math}
\mathcal{F}_{\text{MUSIC+DIAYN}} \nonumber
          = I(S^a; S^s) + I(S^s; Z) + \ent(A \mid S, Z).
\end{math}
The combined version enables the agent to learn diverse control primitives via skill-conditioned policy~\citep{eysenbach2018diversity} in an unsupervised fashion.

\textbf{Comparison and Combination with DISCERN:}
Another relevant work is Discriminative Embedding Reward Networks (DISCERN)~\citep{warde2018unsupervised}, whose objective is to maximize the MI between the state $S$ and the goal $G$, namely $I(S; G)$.
While MUSIC's objective is to maximize the MI between the agent state $S^a$ and the surrounding state $S^s$, namely $I(S^a; S^s)$.
Intuitively, DISCERN attempts to reach a particular goal in each episode, while our method tries to manipulate the surrounding state to \textit{any} different value. 
MUSIC and DISCERN can be combined as 
\begin{math}
\nonumber
\mathcal{F}_{\text{MUSIC+DISCERN}} = I(S^a; S^s) + I(S; G).
\end{math} 
Optionally, we can replace the full states $S$ with $S^s$, since it performs better than with $S$ empirically. 
Through this combination, MUSIC helps DISCERN to learn its discriminative objective.


\section{Experiments}

\begin{figure*}
  \centering
  \includegraphics[width=4.8 in]{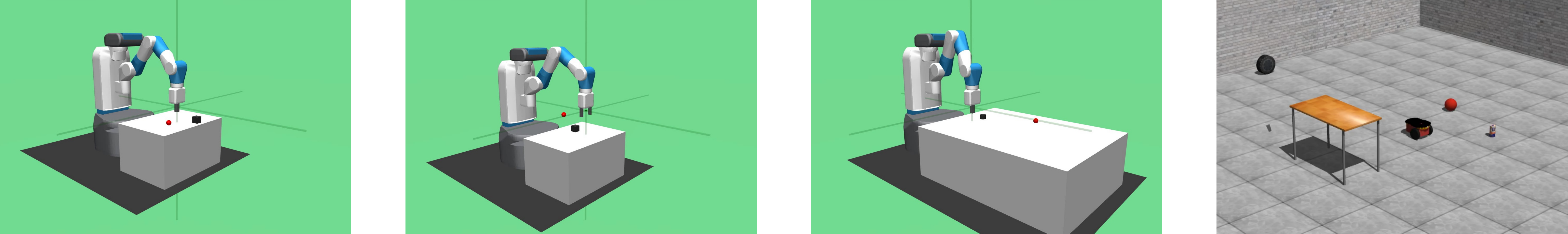}
  \caption{Fetch robot arm manipulation tasks in OpenAI Gym and a navigation task based on the Gazebo simulator: 
\texttt{FetchPush}, \texttt{FetchPickAndPlace}, \texttt{FetchSlide}, \texttt{SocialBot-PlayGround}.}
  \label{fig:fetch3env1nav}
\end{figure*}

\textbf{Environments:} 
To evaluate the proposed methods, we used the robotic manipulation tasks and a navigation task, see Figure~\ref{fig:fetch3env1nav}~\citep{brockman2016openai,plappert2018multi}.
The navigation task is based on the Gazebo simulator. 
In the navigation task, the task reward is 1 if the agent reaches the ball, otherwise, the task reward is 0. 
Here, the agent state is the robot car position and the surrounding state is the red ball.
The manipulation environments, including push, pick-and-place, and slide, have a set of predefined goals, which are represented as the red dots. 
The task for the RL agent is to manipulate the object to the goal positions. 
In the manipulation task, the agent state is the gripper position and the surrounding state is the object position.

\textbf{Experiments:}
First, we analyze the control behaviors learned purely with the intrinsic reward, i.e., MUSIC-u. 
Secondly, we show that the pretrained models can be used for improving performance in conjunction with the task rewards. 
Interestingly, we show that the pretrained MI estimator can be transferred among different tasks and still improve performance. 
We compared MUSIC with other methods, including DDPG~\citep{lillicrap2015continuous}, SAC~\citep{haarnoja2018soft}, DIAYN~\citep{eysenbach2018diversity}, DISCERN~\citep{warde2018unsupervised}, PER~\citep{schaul2015prioritized}, VIME~\citep{houthooft2016vime}, ICM~\citep{pathak2017curiosity}, and Empowerment~\citep{mohamed2015variational}. 
Thirdly, we show some insights about how the MUSIC rewards are distributed across a trajectory. 
The experimental details are shown in Appendix~\ref{app:experimental-details}.
Our code is available at \url{https://github.com/ruizhaogit/music} and \url{https://github.com/ruizhaogit/alf}.


\begin{question}
What behavior does MUSIC-u learn?
\end{question}
We tested MUSIC-u in the robotic manipulation tasks.
During training, the agent only receives the intrinsic MUSIC reward.
In all three environments, the behavior of reaching objects emerges. 
In the push environments, the agent learns to push the object around on the table. 
In the slide environment, the agent learns to slide the object to different directions. 
Perhaps surprisingly, in the pick-and-place environment, the agent learns to pick up the object from the table without any task reward.
All the observations are shown in the supplementary \href{https://youtu.be/AUCwc9RThpk}{video}.


\begin{question}
How does MUSIC-u compare to Empowerment or ICM?
\end{question}

\begin{wrapfigure}[10]{r}{0.55\textwidth}
    \vspace{-1em}
    \includegraphics[width=\linewidth]{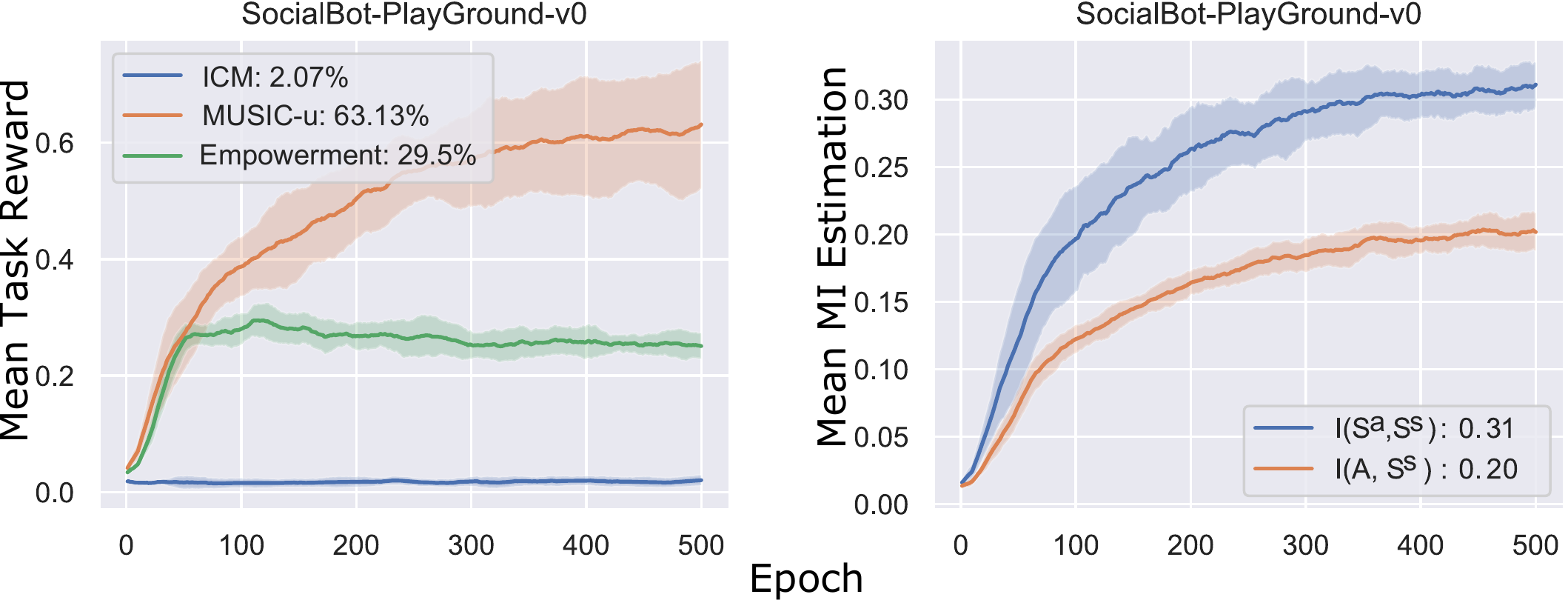}
    \vspace{-2em}
    \caption{\textbf{Experimental results}\label{fig:navigation-task}}
\end{wrapfigure}

We tested our method in the navigation task. 
We combined our method with PPO~\citep{schulman2017proximal} and compared the performance with ICM~\citep{pathak2017curiosity} and Empowerment~\citep{mohamed2015variational}. 
During training, we only used one of the intrinsic rewards such as MUSIC, ICM, or Empowerment to train the agent. 
Then, we used the averaged task reward as the evaluation metric. 
The experimental results are shown in Figure~\ref{fig:navigation-task} (left). The y-axis represents the mean task reward and the x-axis denotes the training epochs. 
Figure~\ref{fig:navigation-task} (right) shows that the MUSIC reward signal $I(S^a, S^s)$ is relatively strong compared to the Empowerment reward signal $I(A, S^s)$.
Subsequently, high MI reward encourages the agent to explore more states with higher MI. 
A theoretical connection between Empowerment and MUSIC is shown in Appendix~\ref{app:empowerment}.  
The video starting from 1:28 shows the learned navigation behaviors.


\begin{question}
How does MUSIC compare to DIAYN?
\end{question}
We compared MUSIC, DIAYN and MUSIC+DIAYN in the pick-and-place environment.
For implementing MUSIC+DIAYN, we first pre-train the agent with only MUSIC, and then fine-tune the policy with DIAYN. 
After pre-training, the MUSIC-trained agent learns manipulation behaviors such as, reaching, pushing, sliding, and picking up an object.
Compared to MUSIC, the DIAYN-trained agent rarely learns to pick up the object. It mostly pushes or flicks the object with the gripper. 
However, the combined model, MUSIC+DIAYN, learns to pick up the object and moves it to different locations, depending on the skill-option.
These observations are shown in the video starting from 0:46.
From this experiment, we can see that MUSIC helps the agent to learn the DIAYN objective.  
DIAYN alone doesn’t succeed because DIAYN doesn't start to learn any skills until it touches the object, which is rare in the first place. 
This happens because the skill discriminator only encourages the skills to be different. 


\begin{question}
How does MUSIC+DISCERN compare to DISCERN?
\end{question}

\begin{wraptable}[8]{r}{0.65\textwidth}
\vspace{-2.em}
\centering
\caption{\textbf{Comparison of DISCERN with and without MUSIC}}
\begin{tabular}{ p{2.9cm}    p{2.5cm}   p{2.6cm} } \toprule
Method  & Push (\%) & Pick \& Place (\%) \\ \midrule
DISCERN & 7.94\% $\pm$ 0.71\%  & 4.23\% $\pm$ 0.47\%  \\
R (Task Reward)  & 11.65\% $\pm$ 1.36\% & 4.21\% $\pm$ 0.46\%  \\
R+DISCERN & 21.15\% $\pm$ 5.49\% & 4.28\% $\pm$ 0.52\%  \\
R+DISCERN+MUSIC & 95.15\% $\pm$ 8.13\% & 48.91\% $\pm$ 12.67\% \\ \bottomrule
\end{tabular}
\label{tab:music-discern}
\end{wraptable}
 
The combination of MUSIC and DISCERN, encourages the agent to learn to control the object via MUSIC and then move the object to the target position via DISCERN.
Table~\ref{tab:music-discern} shows that DISCERN+MUSIC significantly outperforms DISCERN. 
This is because that MUSIC emphases more on state-control and teaches the agent to interact with an object.
Afterwards, DISCERN teaches the agent to move the object to the goal position in each episode.

\begin{question}
How can we use MUSIC to accelerate learning?
\end{question}
\begin{figure*}
  \centering
  \includegraphics[width=1.\linewidth]{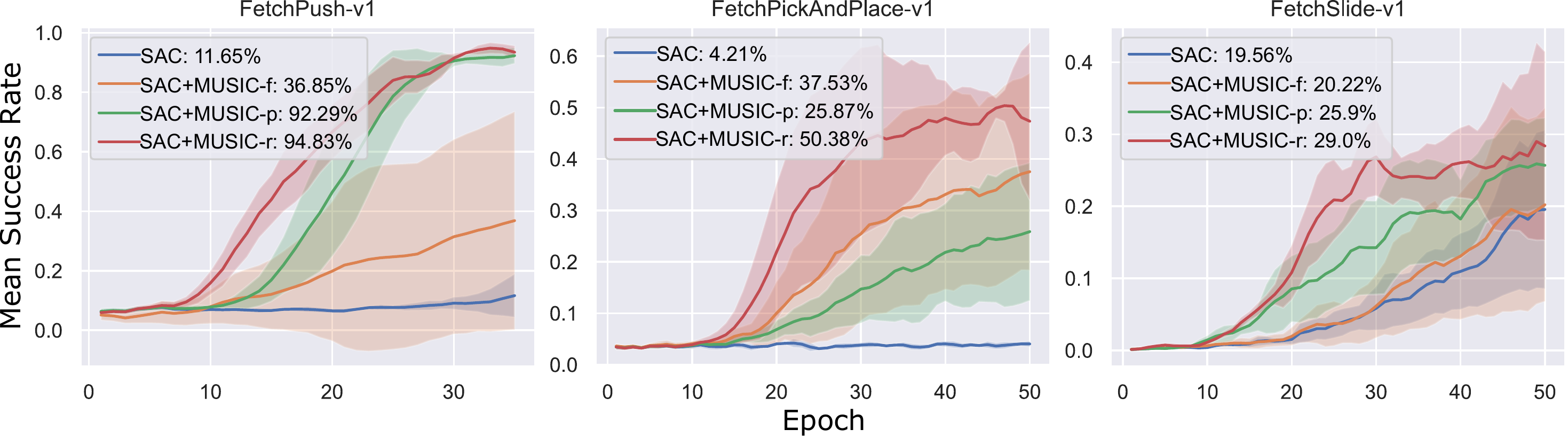} 
  \caption{\textbf{Mean success rate with standard deviation:} The percentage values after colon (:) represent the best mean success rate during training. The shaded area describes the standard deviation. A full comparison is shown in Appendix~\ref{app:results} Figure~\ref{fig:accuracy_6_plots}.}
  \label{fig:fig_accuracy}
\end{figure*}
We investigated three ways, including MUSIC-f, MUSIC-p, and MUSIC-r, of using MUSIC to accelerate learning in addition to the task reward.
We combined these three variants with DDPG and SAC and tested them in the multi-goal robotic tasks. 
From Figure~\ref{fig:fig_accuracy}, we can see that all these three methods, including MUSIC-f, MUSIC-p, and MUSIC-r, accelerate learning in the presence of task rewards. 
Among these variants, the MUSIC-r has the best overall improvements. 
In the push and pick-and-place tasks, MUSIC enables the agent to learn in a short period of time. 
In the slide tasks, MUSIC-r also improves the performances by a decent margin.

We also compare our methods with their closest related methods. 
To be more specific, we compare MUSIC-f against the parameter initialization using DIAYN~\citep{eysenbach2018diversity}; MUSIC-p against Prioritized Experience Replay (PER), which uses TD-errors for prioritization~\citep{schaul2015prioritized}; and MUSIC-r versus Variational Information Maximizing Exploration (VIME)~\citep{houthooft2016vime}. 
The experimental results are shown in Figure~\ref{fig:fig_compare}. 
From Figure~\ref{fig:fig_compare} (1\textsuperscript{st} column), we can see that MUSIC-f enables the agent to learn, while DIAYN does not. 
In the 2\textsuperscript{nd} column of Figure~\ref{fig:fig_compare}, MUSIC-r performs better than VIME. 
This result indicates that the MI between states is a crucial quantity for accelerating learning. 
The MI intrinsic rewards boost performance significantly compared to VIME. 
This observation is consistent with the experimental results of MUSIC-p and PER, as shown in Figure~\ref{fig:fig_compare} (3\textsuperscript{rd} column), where the MI-based prioritization framework performs better than the TD-error-based approach, PER. 
On all tasks, MUSIC enables the agent to learn the benchmark task more quickly.
\begin{figure*}
  \centering
  \includegraphics[width=1.\linewidth]{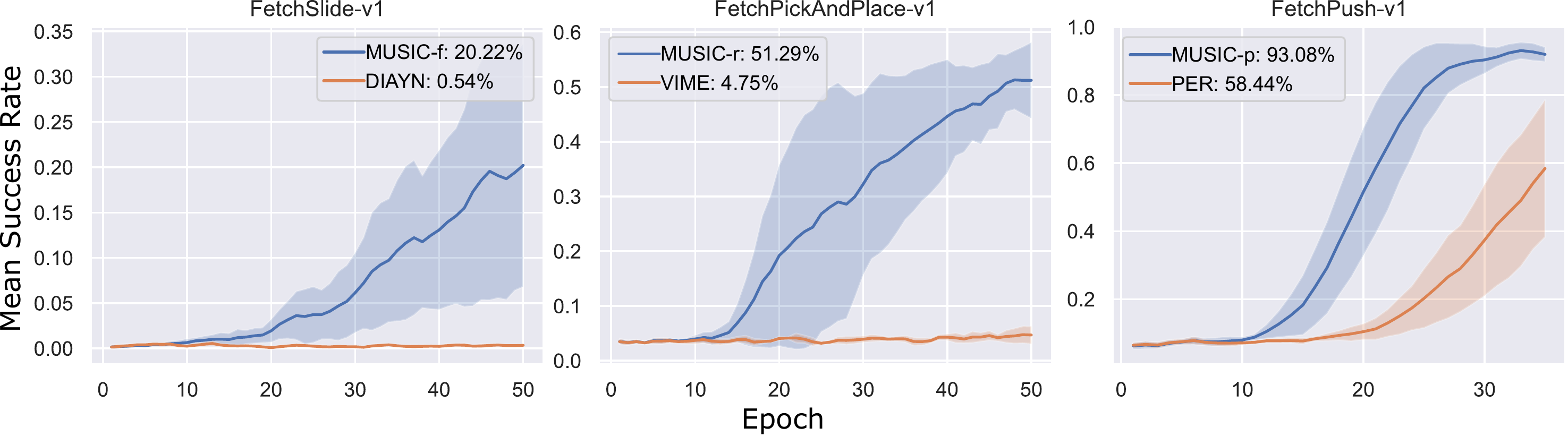} 
  \caption{\textbf{Performance comparison:} We compare the MUSIC variants, including MUSIC-f, MUSIC-r, and MUSIC-p, with DIAYN, VIME, and PER, respectively. A full comparison is shown in Appendix~\ref{app:results} Figure~\ref{fig:compare_9_plots}.}
  \label{fig:fig_compare}
\end{figure*}


\begin{question}
Can the learned MI estimator be transferred to new tasks?
\end{question}
\begin{wrapfigure}[10]{r}{0.55\textwidth}
    \vspace{-0em}
    \includegraphics[width=\linewidth]{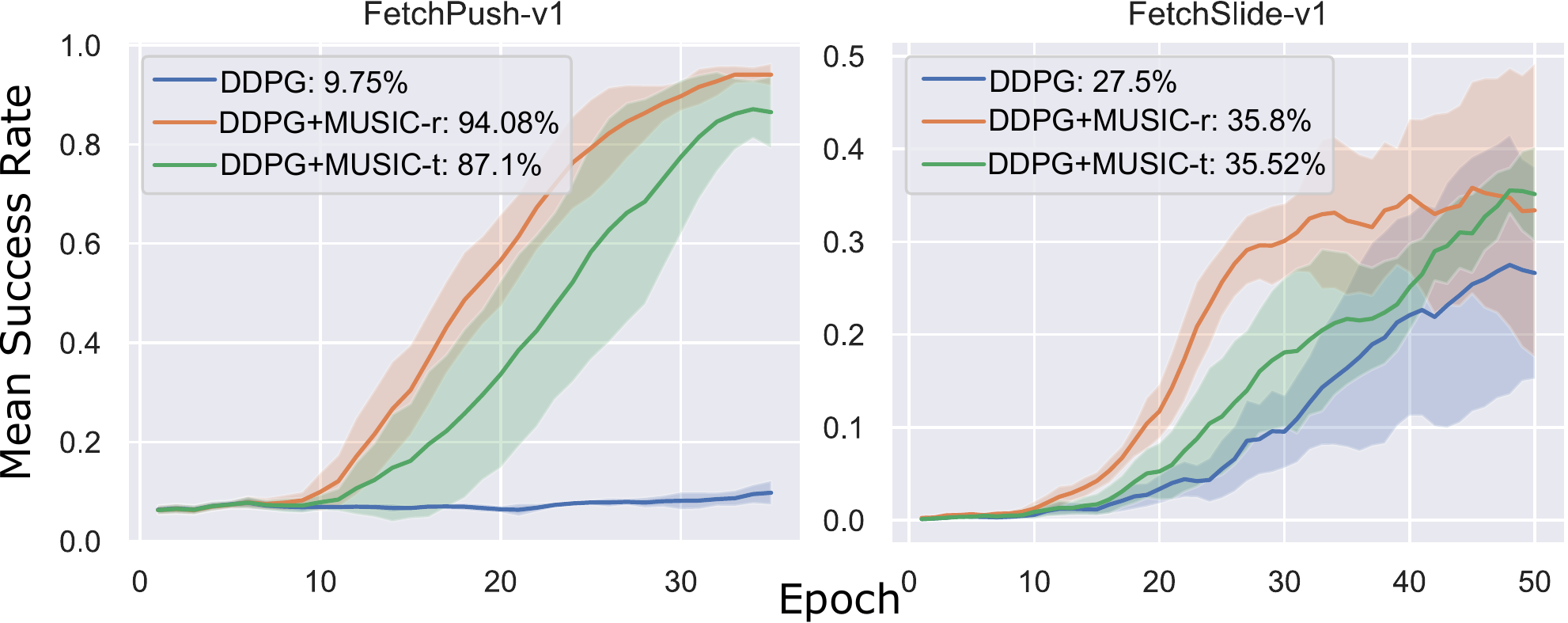}
    \vspace{-2em}
    \caption{\textbf{Transferred MUSIC}\label{fig:transfer}}
\end{wrapfigure}
It would be beneficial if the pretrained MI estimator could be transferred to a new task and still improve the performance~\citep{pan2010survey,bengio2012deep}. 
To verify this idea, we directly applied the pretrained MI estimator from the pick-and-place environment to the push and slide environments, respectively, and train the agent from scratch. 

We denote this transferred method as ``MUSIC-t'', where ``-t'' stands for transfer. 
The MUSIC reward function trained in its corresponding environments is denoted as ``MUSIC-r''. 
We compared the performances of DDPG, MUSIC-r, and MUSIC-t. 
The results are in Figure~\ref{fig:transfer}, which shows that the transferred MUSIC still improved the performance significantly. 
Furthermore, as expected, MUSIC-r performed better than MUSIC-t. 
We can see that the MI estimator can be trained in a task-agnostic~\citep{finn2017model} fashion and later utilized in unseen tasks.


\begin{question}
How does MUSIC distribute rewards over a trajectory?
\end{question}
\begin{wrapfigure}[8]{r}{0.55\textwidth}
    \vspace{-1em}
    \includegraphics[width=\linewidth]{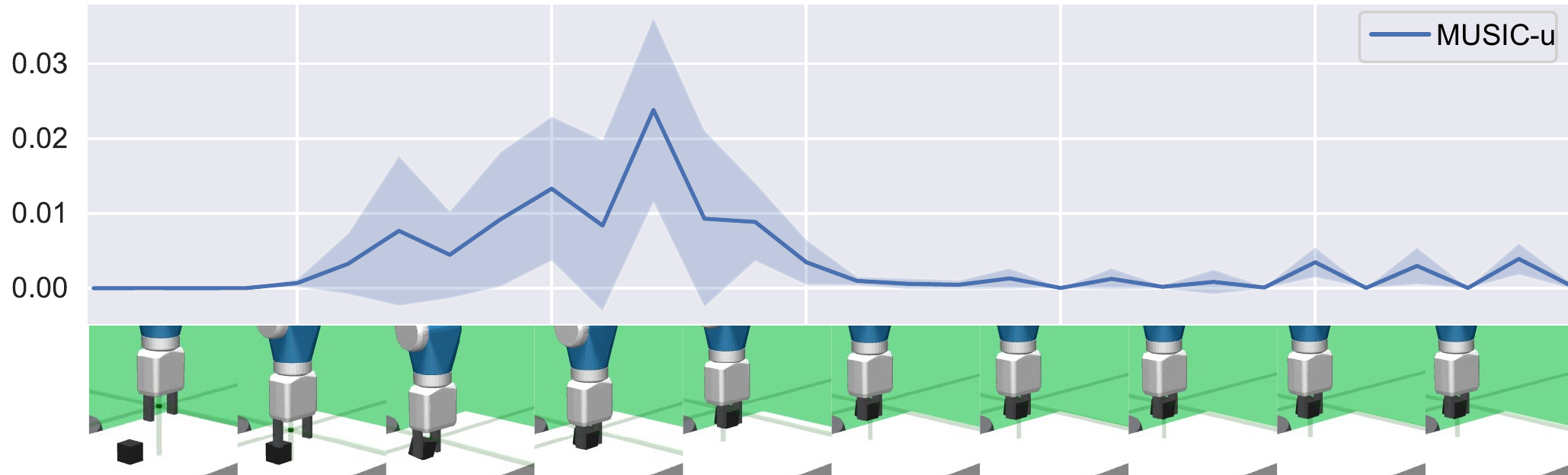}
    \vspace{-2em}
    \caption{\textbf{MUSIC rewards over a trajectory}\label{fig:music-r}}
\end{wrapfigure}
To understand why MUSIC works, we visualize the learned MUSIC-u reward in Figure~\ref{fig:music-r}. 
We can observe that the MI reward peaks between the 4th and 5th frame, where the robot quickly picks up the cube from the table. 
Around the peak reward value, the middle range reward values are corresponding to the relatively slow movement of the object and the gripper (see the 3rd, 9th, and 10th frame). 
When there is no contact between the gripper and the cube (see the 1st \& 2nd frames), or the gripper holds the object still (see the 6th to 8th frames) the intrinsic reward remains nearly zero. 
From this example, we see that MUSIC distributes positive intrinsic rewards when the surrounding state has correlated changes with the agent state.


\begin{question}
How does MUSIC reward compare to reward shaping?
\end{question}
Here, we want to compare MUSIC and reward shaping and show that MUSIC cannot be easily replaced by reward shaping. We consider a simple L2-norm reward shaping, which is the distance between the robot's gripper and the object. With this hand-engineered reward, the agent learns to move its gripper close to the object but barely touch the object. However, with MUSIC reward, the agent reaches the object and moves it into different locations. 
MUSIC automatically induces a lot of hand-engineered rewards, including the L2-norm distance reward between the gripper and the object, the contact reward between the agent and the object, the L2-norm distance reward between the object and the goal position and any other rewards that maximize the mutual information between the agent and the surrounding state. From this perspective, MUSIC can be considered as a meta-reward for the state-control tasks, which helps the agent to learn any specific downstream tasks that falls into this category.


\begin{question}
Can MUSIC help the agent to learn when there are multiple surrounding objects? 
\end{question}
When there are multiple objects, the agent is trained to maximize the MI between the surrounding objects and the agent via MUSIC.
In the case that there is a red and a blue ball on the ground, with MUSIC, the agent learns to reach both balls and sometimes also learns to use one ball to hit the other ball. 
The results are shown in the supplementary video starting from 1:56.


\textbf{Summary and Future Work:}
We can see that, with different combinations of the surrounding state and the agent state, the agent is able to learn different control behaviors. 
We can train a skill-conditioned policy corresponding to different combinations of the agent state and the surrounding state and later use the pretrained policy for the tasks at hand, see Appendix~\ref{app:skill-discovery} ``Skill Discovery for Hierarchical Reinforcement Learning''.
In some cases, when there is no clear agent-surrounding separation or the existing separation is suboptimal, new methods are needed to divide and select the states automatically. 
Another future work direction is to extend the current method to the partially observed cases. For example, we can combine MUSIC with state estimation methods and extend MUSIC to the partially observed settings.


\section{Related Work}

Intrinsically motivated RL is a challenging topic.
We divide the previous works in three categories.
In the first category, intrinsic rewards are often used to help the agent learn more efficiently to solve tasks. For example, \citet{jung2011empowerment} and \citet{mohamed2015variational} use empowerment, which is the channel capacity between states and actions. 
A theoretical connection between MUSIC and empowerment is shown in Appendix~\ref{app:empowerment}.
VIME~\citep{houthooft2016vime} and ICM~\citep{pathak2017curiosity} use curiosity as intrinsic rewards to encourage the agents to explore the environment more thoroughly.
Another category of work on intrinsic motivation for RL is to discover meaningful skills, such as Variational Intrinsic Control (VIC) \citep{gregor2016variational}, DIAYN \citep{eysenbach2018diversity}, and Explore Discover Learn (EDL)~\citep{campos2020explore}. 
In the third category, intrinsic motivation also helps the agent to learn goal-conditioned policies. 
\citet{warde2018unsupervised} proposed DISCERN, a method to learn a MI objective between the states and goals. 
Based on DISCERN, \citet{pong2019skew} introduced Skew-fit, which adapts a maximum entropy strategy to sample goals from the replay buffer~\citep{zhao2019maximum} in order to make the agent learn more efficiently in the absence of rewards.
However, these methods fail to enable the agent to learn meaningful interaction skills in the environment, such as in the robot manipulation tasks. Our work is based on the agent-surrounding separation concept and drives an efficient state intrinsic control objective, which empowers RL agents to learn meaningful interaction and control skills without any task reward. 
A recent work~\citep{song2020mega} with similar motivation, introduces mega-reward, which aims to maximize the control capabilities of agents on given entities in a given environment and show promising results in Atari games. Another related work~\citep{dilokthanakul2019feature} proposes feature control as intrinsic motivation and shows state-of-the-art results in Montezuma’s revenge.

In this paper, we introduce MUSIC, a method that uses the MI between the surrounding state and the agent state as the intrinsic reward. 
In contrast to previous works on intrinsic rewards~\citep{mohamed2015variational,houthooft2016vime,pathak2017curiosity,eysenbach2018diversity,warde2018unsupervised}, MUSIC encourages the agent to interact with the interested part of the environment, which is represented by the surrounding state, and learn to control it. 
The MUSIC intrinsic reward is critical when controlling a specific subset of the environmental state is the key to complete the task, such as the case in robotic manipulation tasks.
Our method is complementary to these previous works, such as DIAYN and DISCERN, and can be combined with them.  
Inspired by previous works~\citep{schaul2015prioritized,houthooft2016vime,eysenbach2018diversity}, we additionally demonstrate three variants, including MUSIC-based fine-tuning, rewarding, and prioritizing mechanisms, to significantly accelerate learning in the downstream tasks.


\section{Conclusion}

This paper introduces Mutual Information-based State Intrinsic Control (MUSIC), an unsupervised RL framework for learning useful control behaviors.  
The derived efficient MI-based theoretical objective encourages the agent to control states without any task reward. 
MUSIC enables the agent to self-learn different control behaviors, which are non-trivial, intuitively meaningful, and useful for learning and planning.
Additionally, the pretrained policy and the MI estimator significantly accelerate learning in the presence of task rewards. 
We evaluated three MUSIC-based variants in different environments and demonstrate a substantial improvement in learning efficiency compared to state-of-the-art methods.

\bibliography{reference}
\bibliographystyle{iclr2021_conference}

\appendix
\section*{Appendix}

\section{Monotonically Increasing Relationship}
\label{app:surrogate}
\begin{lemma}
There is a monotonically increasing relationship between $I_{\phi}(S^s; S^a \mid \Tau)$ and $\EE_{\PM{\Tau'}} [ I_{\phi}(S^s; S^a \mid \Tau')]$, mathematically,  
\begin{align}
\label{eq:mi-eq-surrogate-app}
I_{\phi}(S^s; S^a \mid \Tau) \ltimes \EE_{\PM{\Tau'}} [ I_{\phi}(S^s; S^a \mid \Tau')],
\end{align}
where $S^s$, $S^a$, and $\Tau$ denote the surrounding state, the agent state, and the trajectory, respectively. The trajectory fractions are defined as the adjacent state pairs, namely $\Tau'=\{S_{t}, S_{t+1}\}$.
The symbol $\ltimes$ denotes a monotonically increasing relationship between two variables and $\phi$ represents the parameter of the statistics model in MINE.
\end{lemma}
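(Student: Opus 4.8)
The plan is to unfold both sides of \eqref{eq:mi-eq-surrogate-app} into their Donsker--Varadhan / MINE form and compare them term by term. Writing the states of a length-$n$ trajectory as $\{(s^s_i,s^a_i)\}_{i=1}^{n}$ and abbreviating $a_i=\SN(s^s_i,s^a_i)$ for the matched (joint) evaluations and $b_i=\SN(s^s_i,\bar{s}^a_i)$ for the shuffled (marginal) evaluations, the full-trajectory estimate reads
\begin{align}
I_{\phi}(S^s;S^a\mid\Tau)=\frac{1}{n}\sum_{i=1}^{n}a_i-\log\Big(\frac{1}{n}\sum_{i=1}^{n}e^{b_i}\Big),
\end{align}
while, with $\Tau'=\{s_t,s_{t+1}\}$ and $\PM{\Tau'}$ uniform over the $n-1$ adjacent pairs, the averaged-pair estimate reads
\begin{align}
\EE_{\PM{\Tau'}}\!\big[I_{\phi}(S^s;S^a\mid\Tau')\big]=\frac{1}{n-1}\sum_{t=1}^{n-1}\!\Big[\tfrac{1}{2}(a_t+a_{t+1})-\log\big(\tfrac{1}{2}(e^{b_t}+e^{b_{t+1}})\big)\Big].
\end{align}
I would then analyse the two pieces, the linear ``expectation'' term and the nonlinear log-partition term, separately.

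For the linear term I would use a simple counting (telescoping) argument: each interior index appears in exactly two adjacent pairs and each endpoint in exactly one, so $\frac{1}{n-1}\sum_t \tfrac12(a_t+a_{t+1})=\frac{1}{2(n-1)}(a_1+2a_2+\cdots+2a_{n-1}+a_n)$, which equals $\frac1n\sum_i a_i$ up to an explicit $O(1/n)$ boundary correction. Hence the first terms of the two estimates agree (exactly in the stationary/large-$n$ limit, and up to endpoint terms otherwise). This identifies the shared ``driving variable'' to which both sides respond.

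For the log-partition term I would invoke the concavity of $\log$. Treating $y_t=\tfrac12(e^{b_t}+e^{b_{t+1}})$ as the per-pair partition value, Jensen's inequality gives $\frac{1}{n-1}\sum_t\log y_t\le\log\big(\frac{1}{n-1}\sum_t y_t\big)$, and the argument on the right is again, by the same counting step, the full-trajectory partition $\frac1n\sum_i e^{b_i}$ up to boundary terms. Thus the averaged-pair log-partition term never exceeds the full-trajectory one. Subtracting, and using that the linear terms coincide, yields $\EE_{\PM{\Tau'}}[I_\phi(S^s;S^a\mid\Tau')]\ge I_\phi(S^s;S^a\mid\Tau)$, with the two moving in lockstep: both are increasing in the common matched-minus-shuffled statistic $a_i-b_i$, and the gap between them is exactly the nonnegative Jensen discrepancy of $\log$.

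The main obstacle I anticipate is upgrading this from a plain inequality to the claimed monotone relationship $\ltimes$. The inequality, together with the exact agreement of the linear terms, shows that any parameter update raising the full-trajectory bound raises the matched evaluations $a_i$ relative to the shuffled ones, which simultaneously pushes the averaged-pair bound up; making this co-monotonicity precise requires either treating the shared linear term as the monotone parameter through which both bounds factor, or bounding the Jensen gap and showing it varies consistently alongside. A secondary technicality is the careful bookkeeping of the endpoint terms, which I would dispatch by assuming trajectory stationarity or by absorbing them into an explicit $O(1/n)$ remainder, so that they do not alter the direction of the relationship.
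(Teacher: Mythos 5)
Your route is genuinely different from the paper's, and on the parts it completes it is actually the more rigorous of the two. The paper never unfolds the empirical sums: it writes $I_{\phi}(S^s;S^a\mid\Tau)$ in Donsker--Varadhan form, replaces $-\log(\EE_{\PI{S^s\mid\Tau}{S^a\mid\Tau}}[e^{\FGen_\phi}])$ by $-\EE_{\PI{S^s\mid\Tau}{S^a\mid\Tau}}[e^{\FGen_\phi}]$ by invoking $\log(x)\ltimes x$, uses linearity of expectation to rewrite the resulting log-free quantity exactly as $\EE_{\PM{\Tau'}}\bigl[\EE_{\PJ{S^s}{S^a\mid\Tau'}}[\FGen_\phi]-\EE_{\PI{S^s\mid\Tau'}{S^a\mid\Tau'}}[e^{\FGen_\phi}]\bigr]$, and then applies $\log(x)\ltimes x$ again inside each fraction. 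That is, the paper routes both sides through a shared log-free intermediate, which is precisely the first of the two repair strategies you sketch at the end (``treating the shared linear term as the monotone parameter through which both bounds factor''). Your version buys two things the paper's does not: the Jensen step yields an actual one-sided inequality $\EE_{\PM{\Tau'}}[I_{\phi}(S^s;S^a\mid\Tau')]\ge I_{\phi}(S^s;S^a\mid\Tau)$ with the gap identified as the Jensen discrepancy of $\log$, and your endpoint bookkeeping is more careful than the paper's, which silently assumes the fraction-wise expectations recombine into the trajectory-wise ones even though interior states are counted twice in adjacent pairs. What the paper's version buys is that it ``closes'' the monotonicity claim, but only via the same informal device ($A-\log B\ltimes A-B$ treated as a relation between functionals of two variables) that you correctly flag as the unresolved obstacle in your own writeup. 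So the gap you identify is real, but it is equally present in the published proof; neither argument makes the relation $\ltimes$ precise, and your inequality-plus-shared-linear-term analysis is a defensible, arguably stronger, substitute for the paper's double application of $\log(x)\ltimes x$.
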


\begin{proof}
The derivation of the monotonically increasing relationship is shown as follows:
\begin{align}
\label{eq:mi-eq-donsker}
I_{\phi}(S^s; S^a \mid \Tau)
=& \EE_{\PJ{S^s}{S^a \mid \Tau}}[\FGen_\phi] - \log(\EE_{\PI{S^s \mid \Tau}{S^a \mid \Tau}}[e^{\FGen_\phi}]) \\
\label{eq:mi-eq-x}
\ltimes& \EE_{\PJ{S^s}{S^a \mid \Tau}}[\FGen_\phi] - \EE_{\PI{S^s \mid \Tau}{{S}^a \mid \Tau}}[e^{\FGen_\phi}]\\
\label{eq:mi-eq-fraction-x}
=& \EE_{\PM{\Tau'}} [ \EE_{\PJ{S^s}{S^a \mid \Tau'}}[\FGen_\phi] - \EE_{\PI{S^s \mid \Tau'}{S^a \mid \Tau'}}[e^{\FGen_\phi}] ] \\
\label{eq:mi-eq-fraction-log}
\ltimes& \EE_{\PM{\Tau'}} [ \EE_{\PJ{S^s}{S^a \mid \Tau'}}[\FGen_\phi] - \log(\EE_{\PI{S^s \mid \Tau'}{S^a \mid \Tau'}}[e^{\FGen_\phi}]) ]
= \EE_{\PM{\Tau'}} [ I_{\phi}(S^s; S^a \mid \Tau')],
\end{align}
where $T_{\phi}$ represents a neural network, whose inputs are state samples and the output is a scalar.
For simplicity, we use the symbol $\ltimes$ to denote a monotonically increasing relationship between two variables, for example, $\log(x) \ltimes x$ means that as the value of $x$  increases, the value of $\log(x)$ also increases and vice versa.
To decompose the lower bound Equation~(\ref{eq:mi-eq-donsker}) into small parts, we make the following derivations, see Equation~(\ref{eq:mi-eq-x},\ref{eq:mi-eq-fraction-x},\ref{eq:mi-eq-fraction-log}).
Deriving from Equation~(\ref{eq:mi-eq-donsker}) to Equation~(\ref{eq:mi-eq-x}), we use the property that $\log(x) \ltimes x$.
Here, the new form, Equation~(\ref{eq:mi-eq-x}), allows us to decompose the MI estimation into the expectation over MI estimations of each trajectory fractions, Equation~(\ref{eq:mi-eq-fraction-x}). 
To be more specific, we move the implicit expectation over trajectory fractions in Equation~(\ref{eq:mi-eq-x}) to the front, and then have Equation~(\ref{eq:mi-eq-fraction-x}).
The quantity inside the expectation over trajectory fractions is the MI estimation using only each trajectory fraction, see Equation~(\ref{eq:mi-eq-fraction-x}). We use the property, $\log(x) \ltimes x$, again to derive from Equation~(\ref{eq:mi-eq-fraction-x}) to Equation~(\ref{eq:mi-eq-fraction-log}).
\end{proof}

\section{Connection to Empowerment}
\label{app:empowerment}
The state $S$ contains the surrounding state $S^s$ and the agent state $S^a$. 
For example, in robotic tasks, the surrounding state and the agent state represents the object state and the end-effector state, respectively. 
The action space is the change of the gripper position and the status of the gripper, such as open or closed. 
Note that, the agent's action directly alters the agent state.

Here, given the assumption that the transform, $S^a=F(A)$, from the action, $A$, to the agent state, $S^a$, is a smooth and uniquely invertible mapping~\citep{kraskov2004estimating}, then we can prove that the MUSIC objective, $I(S^a, S^s)$, is equivalent to the empowerment objective, $I(A, S^s)$.

The empowerment objective~\citep{klyubin2005empowerment,salge2014empowerment,mohamed2015variational} is defined as the channel capacity in information theory, which means the amount of information contained in the action $A$ about the state $S$, mathematically:
\begin{align}
\label{eq:empowerment}
\mathcal{E} = I(S, A).
\end{align}
Here, we replace the state variable $S$ with the surrounding state $S^s$, we have the empowerment objective as follows,
\begin{align}
\label{eq:empowerment_i}
\mathcal{E} = I(S^s, A).
\end{align}

\begin{theorem}
The MUSIC objective, $I(S^a, S^s)$, is equivalent to the empowerment objective, $I(A, S^s)$, given the assumption that the transform, $S^a=F(A)$, is a smooth and uniquely invertible mapping:
\label{th:music=empowerment}
\begin{align}
\label{eq:music=empowerment}
I(S^a, S^s) = I(A, S^s)
\end{align}
\end{theorem}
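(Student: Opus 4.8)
The plan is to exploit the reparametrization invariance of mutual information under the diffeomorphism $F$. First I would rewrite the MUSIC objective through the differential-entropy decomposition
\[
I(S^a; S^s) = h(S^a) - h(S^a \mid S^s),
\]
where $h$ denotes differential entropy, and likewise $I(A; S^s) = h(A) - h(A \mid S^s)$. Since the claim is an equality of these two expressions, it suffices to track how each entropy term changes under the deterministic map $S^a = F(A)$.

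The central tool is the change-of-variables formula for differential entropy: because $F$ is smooth and uniquely invertible, its Jacobian determinant is nonvanishing almost everywhere, and
\[
h(S^a) = h(A) + \EE_{A}\!\left[\log\left|\det J_F(A)\right|\right],
\]
with $J_F$ the Jacobian of $F$. I would then apply the same argument conditionally. For each fixed value $S^s = s^s$ the map $A \mapsto F(A)$ is the identical diffeomorphism, so $h(S^a \mid S^s = s^s) = h(A \mid S^s = s^s) + \EE_{A \mid S^s = s^s}[\log|\det J_F(A)|]$; averaging over $S^s$ and using the tower property collapses the correction term to exactly the same marginal expectation $\EE_{A}[\log|\det J_F(A)|]$ that appears unconditionally.

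The observation that closes the proof is that this Jacobian term is identical in $h(S^a)$ and in $h(S^a \mid S^s)$, so it cancels upon subtraction:
\[
I(S^a; S^s) = \bigl(h(A) + \EE_A[\log|\det J_F(A)|]\bigr) - \bigl(h(A \mid S^s) + \EE_A[\log|\det J_F(A)|]\bigr) = I(A; S^s).
\]
A slightly cleaner alternative route would avoid entropies altogether by writing $I(S^a; S^s) = \KL{\PJ{S^s}{S^a}}{\PI{S^s}{S^a}}$ and invoking the invariance of the KL divergence under the invertible joint map $(a, s^s) \mapsto (F(a), s^s)$: this map pushes the joint law of $(A, S^s)$ forward to that of $(S^a, S^s)$ and the product of the marginals to the corresponding product, so the divergence is unchanged.

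I expect the main obstacle to be the conditional-entropy bookkeeping rather than any deep idea. The crux is verifying that the Jacobian correction appearing in $h(S^a \mid S^s)$ reduces to precisely the same $\EE_{A}[\log|\det J_F(A)|]$ that appears in $h(S^a)$; this hinges on $F$ acting only on $A$ and not on $S^s$, and on the ``uniquely invertible'' hypothesis, which guarantees the Jacobian determinant is well defined and nonzero almost everywhere so that the change-of-variables formula is legitimate.
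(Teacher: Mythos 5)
Your proof is correct, but your primary route differs from the paper's. The paper works directly with the integral form of the mutual information: it substitutes the change-of-variables identity $p(s^a,s^s) = \left\|\frac{\partial A}{\partial S^a}\right\| p(a,s^s)$ (and likewise for the marginal $p(s^a)$), observes that the Jacobian factor cancels inside the logarithm of the density ratio, and then uses the remaining Jacobian to convert $ds^a$ into $da$, landing immediately on $I(A,S^s)$. You instead decompose $I(S^a;S^s)=h(S^a)-h(S^a\mid S^s)$ and cancel the common correction term $\EE_A[\log|\det J_F(A)|]$ between the marginal and conditional entropies; the conditional bookkeeping via the tower property is handled correctly, since $F$ acts only on $A$. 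The two arguments encode the same underlying fact (invariance of MI under a diffeomorphism of one argument), but the entropy route implicitly requires the individual differential entropies to be finite so that the subtraction is legitimate, a condition the direct density-ratio computation avoids because the Jacobian cancels \emph{inside} the log before any potentially divergent quantity is isolated. Your ``cleaner alternative'' --- invariance of $\KL{\PJ{S^s}{S^a}}{\PI{S^s}{S^a}}$ under the joint pushforward $(a,s^s)\mapsto(F(a),s^s)$ --- is essentially the paper's proof stated abstractly, and is indeed the most robust version of the argument.
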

where $S^s$, $S^a$, and $A$ denote the surrounding state, the agent state, and the action, respectively.
\begin{proof}
\begin{align}
\label{eq:equivalence-prove}
I(S^a, S^s) &= \int \int d{s^a} d{s^s} p(s^a,s^s) \log \frac{p(s^a,s^s)}{p(s^a)p(s^s)}  \\ 
            &= \int \int d{s^a} d{s^s} \left\|\frac{\partial A}{\partial S^a}\right\| p(a,s^s) \log \frac{\left\|\frac{\partial A}{\partial S^a}\right\| p(a,s^s)}{\left\|\frac{\partial A}{\partial S^a}\right\|p(a)p(s^s)}  \\
            &= \int \int d{s^a} d{s^s} J_{A}(s^a) p(a,s^s) \log \frac{J_{A}(s^a) p(a,s^s)}{J_{A}(s^a)p(a)p(s^s)}  \\ 
            &= \int \int d{a} d{s^s} p(a,s^s) \log \frac{p(a,s^s)}{p(a)p(s^s)}  \\
            &= I(A, S^s)
\end{align}
\end{proof}


\section{Mutual Information Neural Estimator Training}
\label{app:mine}

\begin{algorithm}[H]
    \begin{algorithmic}
      \STATE $\theta \gets \text{initialize network parameters}$
      \REPEAT
      \STATE Draw $b$ minibatch samples from the joint distribution:      
      \STATE $(\bm{x}^{(1)}, \bm{z}^{(1)}), \ldots, (\bm{x}^{(b)}, \bm{z}^{(b)}) \sim \PJ{X}{Z}$ 
      \STATE Draw $n$ samples from the $Z$ marginal distribution:
      \STATE $\bar{\bm{z}}^{(1)}, \ldots, \bar{\bm{z}}^{(b)} \sim \PP_{Z}$
      \STATE Evaluate the lower-bound:
      \STATE \hspace{-0mm} {\footnotesize $\mathcal{V}(\theta) \gets \frac{1}{b} \sum_{i=1}^{b}\SN(\bm{x}^{(i)},
      \bm{z}^{(i)}) - \log(\frac{1}{b} \sum_{i=1}^{b} e^{\SN(\bm{x}^{(i)},
      \bar{\bm{z}}^{(i)})})$}
      \STATE Evaluate bias corrected gradients (e.g., moving average):
      \STATE $\widehat{G}(\theta) \gets \widetilde{\nabla}_{\theta}\mathcal{V}(\theta)$
      \STATE Update the statistics network parameters:
      \STATE $\theta \gets \theta + \widehat{G}(\theta)$
      \UNTIL{convergence}
    \end{algorithmic} 
    \caption{MINE~\citep{belghazi2018mine} \label{algo:mine-python}}
  \end{algorithm}

One potential pitfall of training the RL agent using the MINE reward is that the MINE reward signal can be relatively small compared to the task reward signal.
The practical guidance to solve this problem is to tune the scale of the MINE reward to be similar to the scale of the task reward.


\section{Experimental Results}
\label{app:results}
The learned control behaviors without supervision are shown in Figure~\ref{fig:all4skills} as well as in the supplementary video.
The detailed experimental results are shown in Figure~\ref{fig:accuracy_6_plots} and Figure~\ref{fig:compare_9_plots}.
\begin{figure*}[h]
  \centering
  \includegraphics[width=\linewidth]{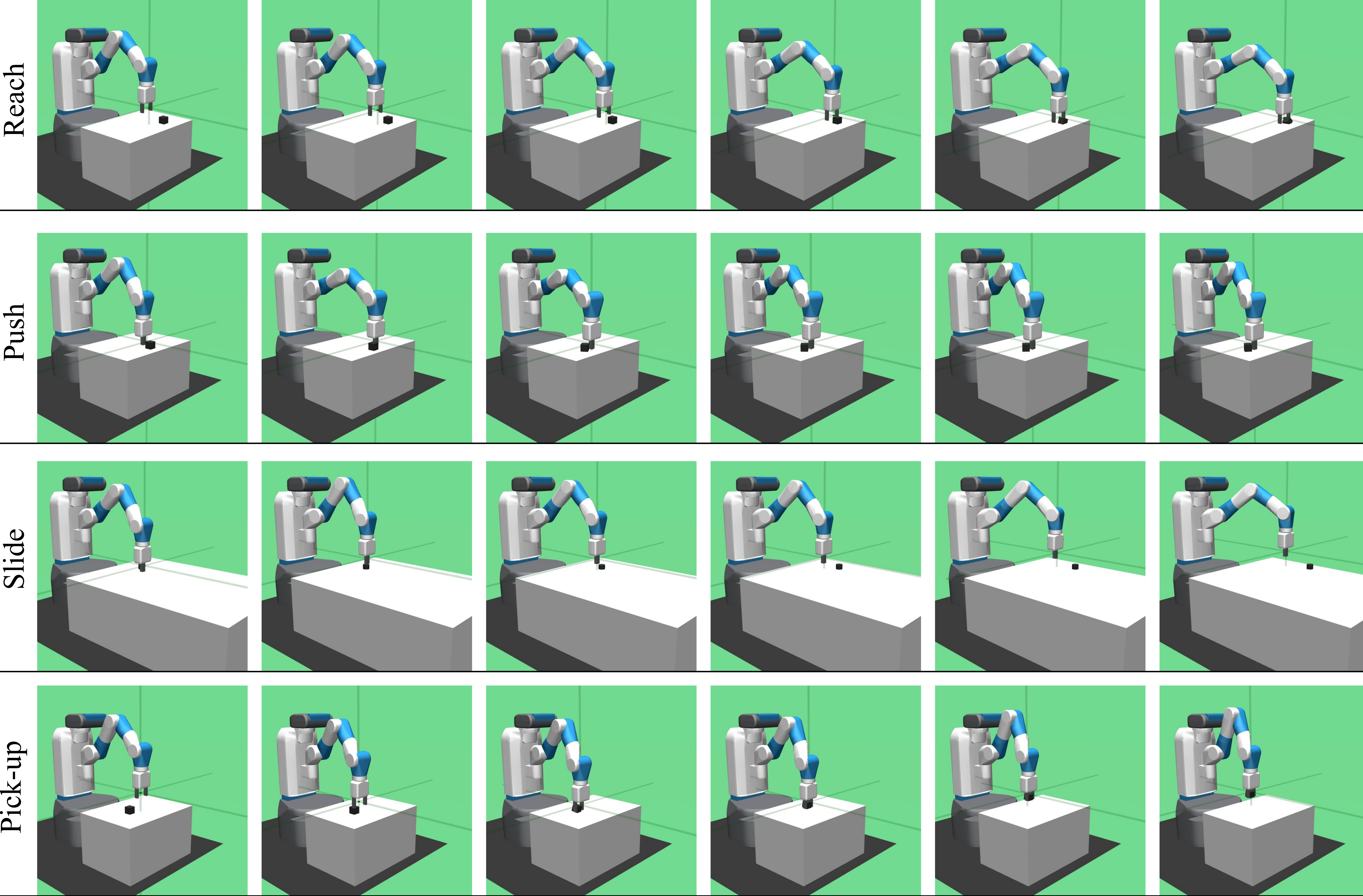}
  \caption{\textbf{Learned Control behaviors with MUSIC}: Without any reward, MUSIC enables the agent to learn control behaviors, such as reaching, pushing, sliding, and picking up an object. The learned behaviors are shown in the supplementary video.}
  \label{fig:all4skills}
\end{figure*}

\begin{figure*}[h]
  \centering
  \includegraphics[width=\linewidth]{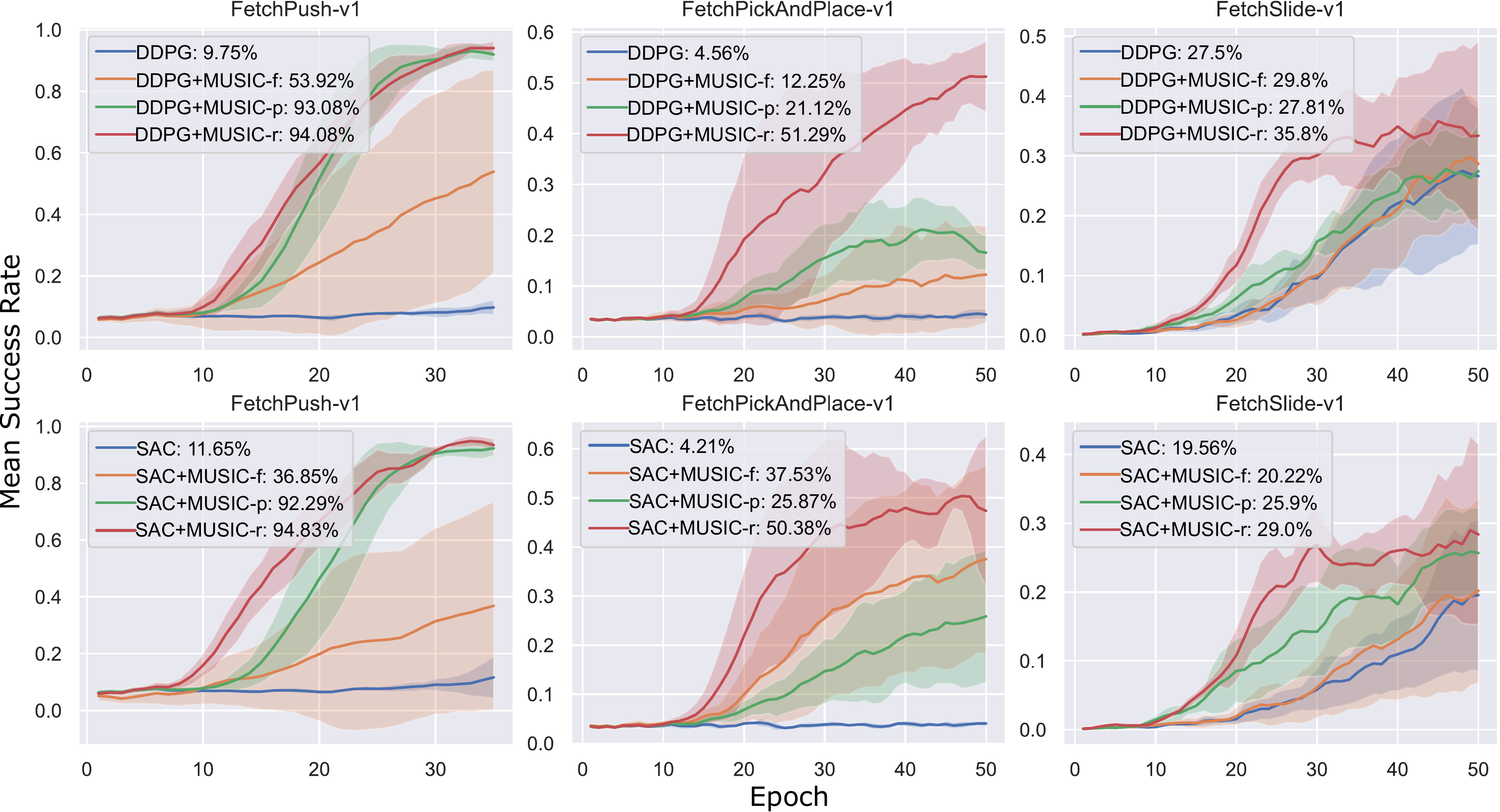}
  \caption{\textbf{Mean success rate with standard deviation:} The percentage values after colon (:) represent the best mean success rate during training. The shaded area describes the standard deviation.}
  \label{fig:accuracy_6_plots}
\end{figure*}

\begin{figure*}[h]
  \centering
  \includegraphics[width=\linewidth]{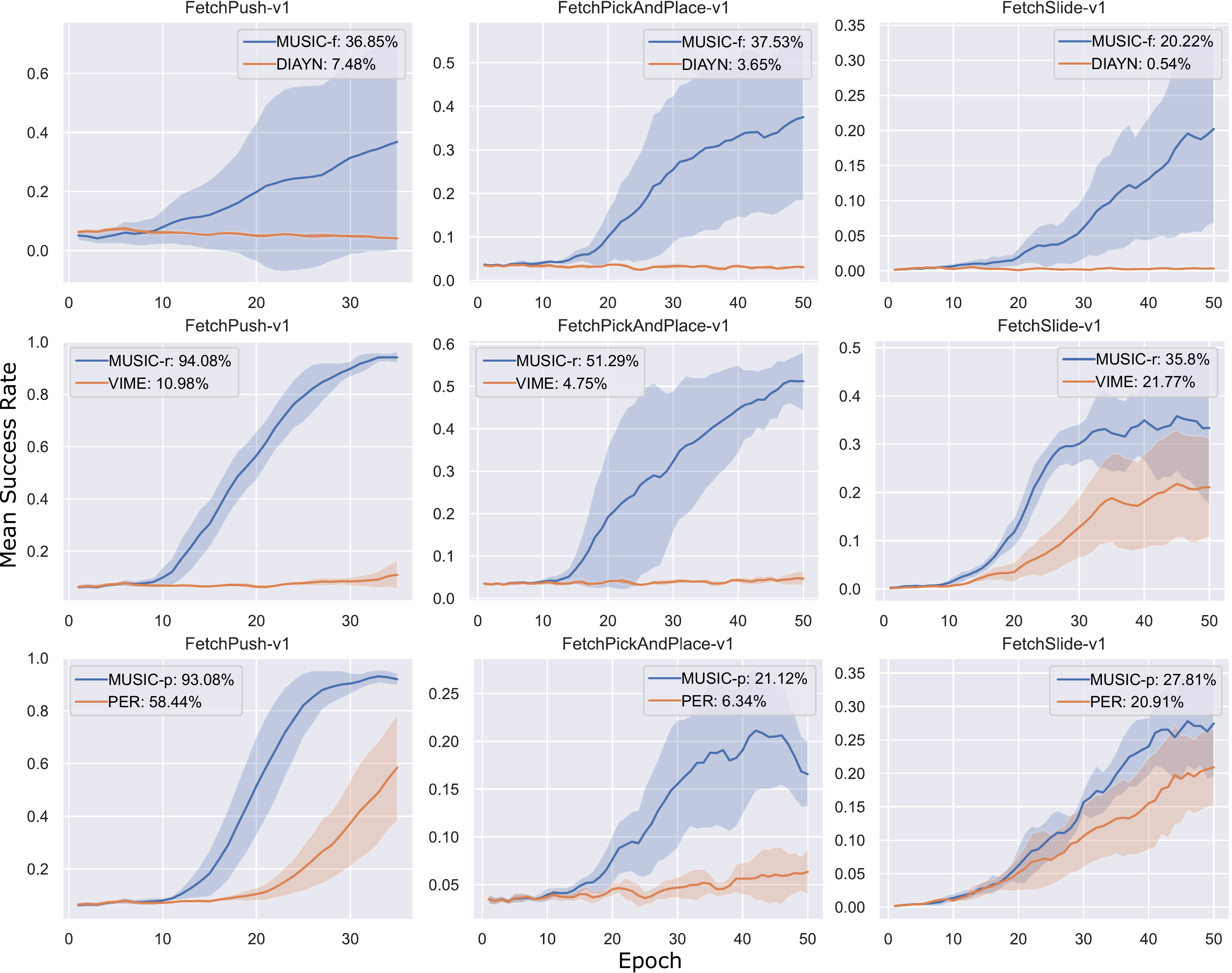}
  \caption{\textbf{Performance comparison:} We compare the MUSIC variants, including MUSIC-f, MUSIC-r, and MUSIC-p, with DIAYN, VIME, and PER, respectively.}
  \label{fig:compare_9_plots}
\end{figure*}


\section{Comparison of Variational MI-based and MINE-based Implementations}
\label{app:variantional-mi}
Here, we compare the variational approach-based \citep{barber2003algorithm} implementation of MUSIC and MINE-based implementation \citep{belghazi2018mine} of MUSIC in Table~\ref{tab:im-mine}.
All the experiments are conducted with 5 different random seeds. 
The performance metric is mean success rate (\%) $\pm$ standard deviation.
The “Task-r” stands for the task reward. 
\begin{table}[h]
\centering
\caption{\textbf{Comparison of variational MI (v)-based and MINE (m)-based MUSIC}}
\begin{tabular}{ p{4.cm}    p{3.cm}   p{3.cm} } \toprule
Method  & Push (\%) & Pick \& Place (\%) \\ \midrule
Task-r+MUSIC(v) & 94.9\% $\pm$ 5.83\%  & 49.17\% $\pm$ 4.9\%  \\
Task-r+MUSIC(m)  & 94.83\% $\pm$ 4.95\% & 50.38\% $\pm$ 8.8\%  \\ \bottomrule
\end{tabular}
\label{tab:im-mine}
\end{table}
From Table~\ref{tab:im-mine}, we can see that the performance of these two MI estimation methods are similar. 
However, the variational method introduces additional complicated sampling mechanisms, and two additional hyper-parameters, i.e., the number of the candidates and the type of the similarity measurement \citep{barber2003algorithm,eysenbach2018diversity,warde2018unsupervised}. 
In contrast, MINE-style MUSIC is easier to implement and has less hyper-parameters to tune. 
Furthermore, the derived objective improves the scalability of the MINE-style MUSIC.


\section{Skill Discovery for Hierarchical Reinforcement Learning}
\label{app:skill-discovery}
In this section, we explore the direction of Hierarchical Reinforcement Learning based on MUSIC.

For example, in the Fetch robot arm pick-and-place environment, we have the follow states as the observation: \texttt{grip\_pos}, \texttt{object\_pos}, \texttt{object\_velp}, \texttt{object\_rot}, \texttt{object\_velr}, where the abbreviation ``\texttt{pos}'' stands for position; ``\texttt{rot}'' stands for rotation; ``\texttt{velp}'' stands for linear velocity, and ``\texttt{velr}'' stands for rotational velocity.

The \texttt{grip\_pos} is the agent state. 
The surrounding states are \texttt{object\_pos}, \texttt{object\_velp}, \texttt{object\_rot}, \texttt{object\_velr}.
In Table~\ref{tab:mi-prior-post}, we show the MI value with different state-pair combinations prior to training and post to training. When the MI value difference is high, it means that the agent has a good learning progress with the corresponding MI objective.
\begin{table}[h]
\centering
\caption{\textbf{Mutual Information estimation prior and post to the training}}
\begin{tabular}{ p{5.cm}    p{4.cm}  p{4.cm}  } \toprule
Mutual Information Objective  & Prior-train Value & Post-train Value  \\ \midrule
MI(\texttt{grip\_pos}; \texttt{object\_pos}) & 0.003 $\pm$ 0.017 & 0.164 $\pm$ 0.055 \\
MI(\texttt{grip\_pos}; \texttt{object\_rot}) & 0.017 $\pm$ 0.084 & 0.461 $\pm$ 0.088 \\
MI(\texttt{grip\_pos}; \texttt{object\_velp}) & 0.005 $\pm$ 0.010 & 0.157 $\pm$ 0.050 \\
MI(\texttt{grip\_pos}; \texttt{object\_velr}) & 0.016 $\pm$ 0.083 & 0.438 $\pm$ 0.084 \\ \bottomrule
\end{tabular}
\label{tab:mi-prior-post}
\end{table}
From Table~\ref{tab:mi-prior-post} first row, we can see that with the intrinsic reward MI(\texttt{grip\_pos}; \texttt{object\_pos}), the agent achieves a high MI after training, which means that the agent learns to better control the object positions using its gripper. Similarly, in the second row of the table, with MI(\texttt{grip\_pos}; \texttt{object\_rot}), the agent learns to control object rotation with its gripper.

From the experimental results, we can see that with different combination of state-pairs of the agent and surrounding state, the agent can learn different skills, such as manipulate object positions or rotations.
We can connect these learned skills with different skill-options \citep{eysenbach2018diversity} and train a meta-controller to control these motion primitives to complete long-horizon tasks in a hierarchical reinforcement learning framework \citep{eysenbach2018diversity}. We consider this as a future research direction, which could be a solution in solving more challenging and complex long-horizon tasks.


\section{Experimental Details}
\label{app:experimental-details}
We ran all the methods in each environment with 5 different random seeds and report the mean success rate and the standard deviation. 
The experiments of the robotic manipulation tasks in this paper use the following hyper-parameters:
\begin{itemize}
    \item Actor and critic networks: $3$ layers with $256$ units each and ReLU non-linearities
    \item Adam optimizer~\citep{kingma2014adam} with $1\cdot10^{-3}$ for training both actor and critic
    \item Buffer size: $10^6$ transitions
    \item Polyak-averaging coefficient: $0.95$
    \item Action L2 norm coefficient: $1.0$
    \item Observation clipping: $[-200, 200]$
    \item Batch size: $256$
    \item Rollouts per MPI worker: $2$
    \item Number of MPI workers: $16$
    \item Cycles per epoch: $50$
    \item Batches per cycle: $40$
    \item Test rollouts per epoch: $10$
    \item Probability of random actions: $0.3$
    \item Scale of additive Gaussian noise: $0.2$
    \item Scale of the mutual information reward: $5000$
\end{itemize}

The specific hyper-parameters for DIAYN are follows:
\begin{itemize}
    \item Number of skill options: $5$
    \item Discriminate skills based on the surrounding state
\end{itemize}

The specific hyper-parameters for VIME are follows:
\begin{itemize}
    \item Weight for intrinsic reward $\eta$: $0.2$
    \item Bayesian Neural Network (BNN) learning rate: $0.0001$
    \item BNN number of hidden units: $32$
    \item BNN number of layers: $2$
    \item Prior standard deviation: $0.5$
    \item Use second order update: $\texttt{True}$
    \item Use information gain: $\texttt{True}$
    \item Use KL ratio: $\texttt{True}$
    \item Number updates per sample: $1$ 
\end{itemize}

The specific hyper-parameters for DISCERN are follows:
\begin{itemize}
    \item Number of candidates to calculate the contrastive loss: $10$
    \item Calculate the MI using the surrounding state
\end{itemize}

The specific hyper-parameters for PER are follows:
\begin{itemize}
    \item Prioritization strength $\alpha$: $0.6$
    \item Importance sampling factor $\beta$: $0.4$
\end{itemize}

The specific hyper-parameter for SAC is following:
\begin{itemize}
    \item Weight of the entropy reward: $0.02$
\end{itemize}

\end{document}